\documentclass{article}
\usepackage{spconf,amsmath,graphicx,amssymb,enumitem,float}
\usepackage{url}
\usepackage{xpatch}
\usepackage{siunitx} % add in preamble
\usepackage{calc,amsfonts,amssymb,amsmath,bm,color,graphicx,cite,epstopdf,nicefrac,stmaryrd}
\usepackage{psfrag,subfigure,float,epstopdf,bbold}
\usepackage[algoruled,linesnumbered]{algorithm2e}
\usepackage{multicol}
\usepackage{multirow,comment}
\usepackage{siunitx}
\usepackage{algorithmic}
\usepackage[normalem]{ulem}
\usepackage{mdframed}
\usepackage{footnote}
\makesavenoteenv{tabular}
\usepackage{booktabs}
\usepackage{subcaption} % add this in your preamble
\usepackage{float}
\usepackage{amsthm}  % <- add this
\usepackage{wrapfig}

\newcommand\norm[1]{\lVert#1\rVert}

\usepackage{xcolor}
\definecolor{orange}{RGB}{255,107,0}
\definecolor{green}{RGB}{50,170,50}
\definecolor{purple}{RGB}{255,0,255}

\usepackage{tikz}
\usetikzlibrary{bayesnet}
\usetikzlibrary{arrows}
\usepackage{color}
\usepackage{graphicx}
\usepackage{caption}
\usepackage{wrapfig}
\usepackage{amsthm}
\usetikzlibrary{backgrounds}
\usepackage{optidef}

\newtheorem{theorem}{Theorem}

\newtheorem{corollary}{Corollary}

\usepackage{xspace}
\usepackage{bbm}
\usepackage{dsfont}
\usepackage{bm}

\newcommand{\Cc}{\mathcal{C}}

\newcommand{\Ec}{\mathcal{E}}

\newcommand{\Lc}{\mathcal{L}}

\newcommand{\ev}{{\bf e}}

\newcommand{\xv}{{\bm x}}
\newcommand{\xvt}{\tilde{\bm x}}
\newcommand{\xvh}{\hat{\bm x}}
\newcommand{\wv}{{\bf w}}

\newcommand{\dv}{{\bm d}}
\newcommand{\yv}{{\bm y}}
\newcommand{\zv}{{\bm z}}

\newcommand{\nv}{{\bf n}}

\newcommand{\cv}{{\bf c}}

\DeclareMathOperator\R{\mathbb{R}}

\def\textiid{i.i.d.\@\xspace}
\newcommand\iid{\ifmmode\text{ i.i.d. } \else \textiid \fi}

\DeclarePairedDelimiter\parens{\lparen}{\rparen}  
\DeclarePairedDelimiter\abs{\lvert}{\rvert}

\DeclarePairedDelimiter\braces{\lbrace}{\rbrace}

\DeclarePairedDelimiter\angles{\langle}{\rangle}

\renewcommand{\P}[1]{\mathbb{P}\parens*{#1}}

\newcommand\eat[1]{}

\usepackage{enumitem}

\usepackage[]{hyperref} % keep this as the LAST package

\title{Perception-based Image Denoising via Generative Compression}
\name{
Nam Nguyen \qquad
Thinh Nguyen \qquad
Bella Bose
\thanks{This work was supported by the National Science Foundation under Grant No. CCF:SHF:2417898.}
}
\address{
School of Electrical and Computer Engineering, Oregon State University, Corvallis, OR 97331, USA
}

\begin{document}
%\ninept
%
\maketitle

\begin{abstract}
Image denoising aims to remove noise while preserving structural details and perceptual realism, yet distortion-driven methods often produce over-smoothed reconstructions, especially under strong noise and distribution shift. 
This paper proposes a generative compression framework for perception-based denoising, where restoration is achieved by reconstructing from entropy-coded latent representations that enforce low-complexity structure, while generative decoders recover realistic textures via perceptual measures such as learned perceptual image patch similarity (LPIPS) loss and Wasserstein distance. 
Two complementary instantiations are introduced: (i) a conditional Wasserstein GAN (WGAN)-based compression denoiser that explicitly controls the rate-distortion-perception (RDP) trade-off, and (ii) a conditional diffusion-based reconstruction strategy that performs iterative denoising guided by compressed latents. 
We further establish non-asymptotic guarantees for the compression-based maximum-likelihood denoiser under additive Gaussian noise, including bounds on reconstruction error and decoding error probability. 
Experiments on synthetic and real-noise benchmarks demonstrate consistent perceptual improvements while maintaining competitive distortion performance.
\end{abstract}

\begin{keywords}
Image denoising, perceptual restoration, generative compression, rate-distortion-perception, conditional WGANs, diffusion models.
\end{keywords}

%===========================================================================================
\section{Introduction}
\label{sec:intro}
Image denoising is a key problem in image processing with applications spanning low-light photography, microscopy, and scientific imaging. The objective is to remove noise while preserving structural details and perceptual realism.
Classical methods rely on hand-crafted priors such as sparsity and nonlocal self-similarity, with BM3D being a prominent example~\cite{dabov2007bm3d}.
More recently, deep discriminative models, including DnCNN~\cite{zhang2017beyond} and FFDNet~\cite{zhang2018ffdnet}, have achieved substantial improvements by learning direct mappings from noisy to clean images.
However, approaches optimized primarily for pixel-wise distortion metrics such as mean squared error (MSE) or peak signal-to-noise ratio (PSNR) often produce over-smoothed reconstructions and loss of fine textures, particularly under strong noise or distribution shift \cite{blau2019rethinking}.

\begin{figure}[t]
    \centering
    \includegraphics[width=0.45\textwidth]{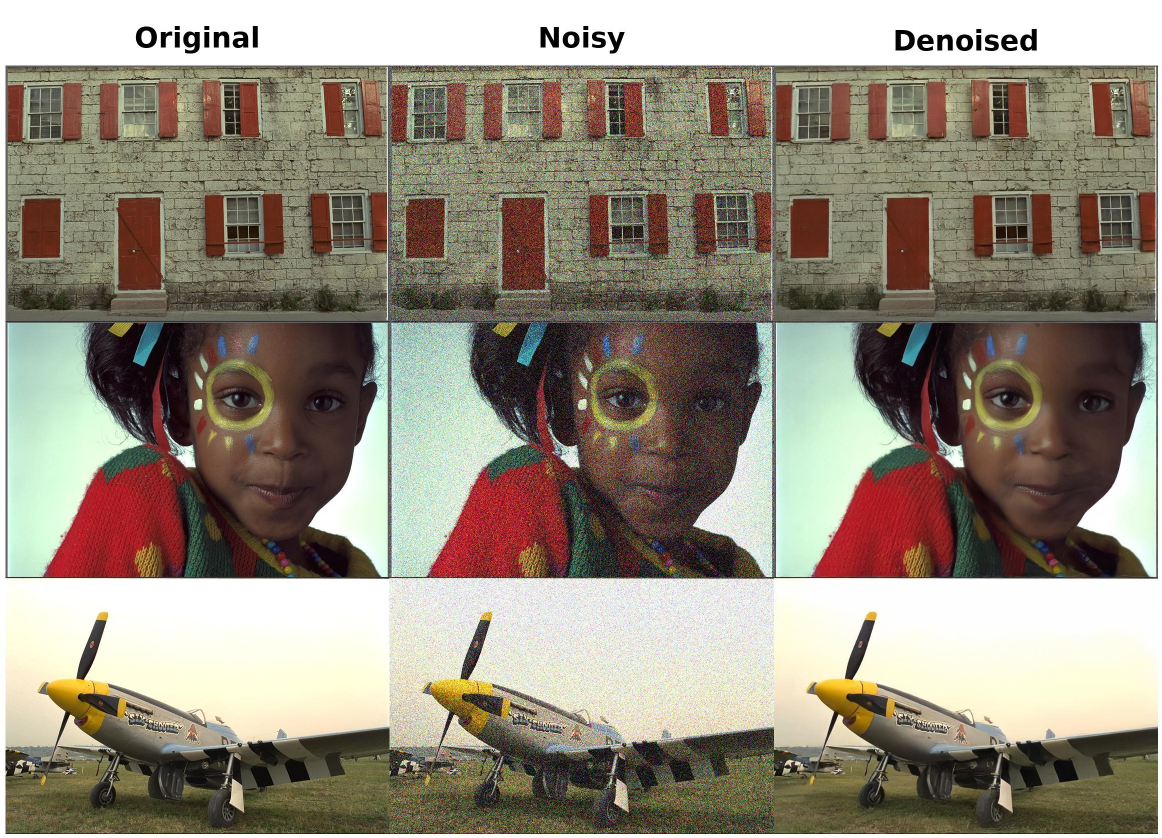}
    \caption{Samples generated by our methods on KODAK images.
    From left to right: original, noisy observation with Gaussian noise $\mathcal{N}(0,\sigma^2)$ with $\sigma=50$, and denoised output.}
    \label{fig:show_method_intro}
\end{figure}

Perceptual image restoration seeks to overcome these limitations by explicitly prioritizing visual quality.
The rate-distortion-perception (RDP) tradeoff established in~\cite{blau2019rethinking} formalizes that minimizing distortion alone is insufficient for perceptual realism, motivating the use of perceptual losses and generative models.
Conditional Wasserstein GAN (WGAN)~\cite{arjovsky2017wasserstein,mentzer2020high} has been widely adopted for perceptual restoration; however, adversarial training is often unstable and may inadequately capture the full diversity of natural image statistics.
More recently, diffusion models have emerged as powerful generative priors, offering stable training and strong mode coverage by learning to reverse a gradual noise-corruption process~\cite{ho2020denoising}.

Denoising and compression are closely connected through the principle that natural signals exhibit low complexity, whereas noise is largely unstructured.
Compression-based denoising exploits this idea by interpreting denoising as a projection onto a low-complexity signal class induced by a lossy codec.
From an information-theoretic perspective, operating a compressor at a distortion level matched to the noise strength suppresses high-complexity noise while preserving structured content~\cite{weissman2005empirical}.
Recent works have instantiated this principle using neural compression models~\cite{zafari2025decompress,zafari2025zeroshot,nguyen2026cross}, but limited perceptual modeling constrains their performance on natural images.

In this paper, we propose a generative compression framework for perception-based image denoising based on entropy-coded latent representations.
We introduce two complementary approaches.
First, a conditional WGAN-based compression denoiser encodes noisy images into compressed latents and reconstructs denoised outputs using conditional adversarial training and perceptual distortion measured by learned perceptual image patch similarity (LPIPS)~\cite{ding2020image}, enabling explicit control over rate, distortion, and perceptual quality.
Second, we propose a conditional diffusion-based reconstruction strategy that performs iterative denoising conditioned on compressed latents, combining the stability and expressiveness of diffusion priors with compression-induced global consistency. On the theory side, we formulate a perception-constrained compression framework for image denoising that explicitly connects denoising performance to RDP theory, and establish non-asymptotic upper bounds on the reconstruction error and the decoding error probability of a compression-based maximum- likelihood denoiser under a Gaussian noise model. Experimental results on synthetic and real-noise benchmarks demonstrate that the proposed methods achieve improved perceptual fidelity while maintaining competitive distortion performance.

%===========================================================================================
\section{Denoising via Perception-Constrained Lossy Compression}

\subsection{Problem Description}
Denoising is a fundamental problem in signal processing and has recently attracted renewed interest in machine learning.
Let $\xv=(x_1,\ldots,x_n)\in\mathbb{R}_+^n$ denote an unknown non-negative signal, and let $\yv=(y_1,\ldots,y_n)$ be its noisy observation.
We assume a memoryless and homogeneous noise model, where observations are conditionally independent given $\xv$, i.e.,
$\yv \sim \prod_{i=1}^n p(y_i | x_i)$.
The objective of denoising is to estimate the underlying signal $\xv$ from $\yv$.

Let $\mathcal{Q}\subset\mathbb{R}^n$ denote a signal class of interest, such as vectorized natural images.
A lossy compression scheme for $\mathcal{Q}$ is defined by an encoder-decoder pair $(f,g)$, where
$f:\mathcal{Q}\rightarrow\{1,\ldots,2^R\}$ and $g:\{1,\ldots,2^R\}\rightarrow\mathbb{R}^n$.
Its performance is characterized by the compression rate $R$, the reconstruction distortion $D$, and the perceptual quality $P$, which measures the discrepancy between the distributions of reconstructed and original signals.

Following RDP theory~\cite{blau2019rethinking}, the trade-off between distortion and perceptual quality at a fixed rate can be characterized by the distortion-perception (DP) function
\begin{align}
\label{prob:D_P_func}
    D(P)
    = \inf_{f,g} \; \mathbb{E}\!\left[\|\xv-\tilde{\xv}\|^2\right]
    \quad \text{s.t.} \quad
    W_2(p_{\xv},p_{\tilde{\xv}})\le P,
\end{align}
where $\tilde{\xv}=g(f(\xv))$, $\|\cdot\|^2$ denotes the MSE distortion, and $W_2(\cdot,\cdot)$ is the Wasserstein-2 distance between the distributions of the original and reconstructed signals.

The unconstrained MSE-optimal estimator is the minimum MSE (MMSE) estimator $\xv^*=\mathbb{E}[\xv|\tilde{\xv}]$, which attains distortion
$D^*=\mathbb{E}[\|\xv-\xv^*\|^2]$ and perception index
$P^*=W_2(p_{\xv},p_{\xv^*})$.
As shown in~\cite{freirich2021theory}, $D(P)$ decreases monotonically with $P$ until $P=P^*$, beyond which it remains constant at $D^*$.

\begin{theorem}[\cite{freirich2021theory}]
The DP function in~\eqref{prob:D_P_func} admits that
\begin{equation*}
    D(P)=D^*+\bigl[(P^*-P)_+\bigr]^2,
\end{equation*}
where $(x)_+=\max(0,x)$.
\end{theorem}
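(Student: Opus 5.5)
We follow the argument of \cite{freirich2021theory}. The plan is to reduce the optimization over estimators $\hat{\xv}$ (here $\tilde{\xv}$ plays the role of the observation from which we estimate) to an optimal-transport problem on the distribution of the MMSE estimate $\xv^*=\mathbb{E}[\xv|\tilde{\xv}]$.

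\textbf{Step 1: Orthogonality decomposition.} For any estimator $\hat{\xv}$ that depends on $\xv$ only through the observation, so that $\xv \to \tilde{\xv} \to \hat{\xv}$ is a Markov chain, we have
\begin{equation*}
\mathbb{E}\|\xv-\hat{\xv}\|^2 = D^* + \mathbb{E}\|\xv^*-\hat{\xv}\|^2 .
\end{equation*}
This holds because the cross term vanishes: $\xv-\xv^*$ is orthogonal to every function of $(\tilde{\xv},\hat{\xv})$ once we condition on $\tilde{\xv}$. Consequently, $D(P)-D^*$ equals the infimum of $\mathbb{E}\|\xv^*-\hat{\xv}\|^2$ subject to $W_2(p_{\xv},p_{\hat{\xv}})\le P$.

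\textbf{Step 2: Lower bound.} Since $(\xv^*,\hat{\xv})$ is a coupling of $p_{\xv^*}$ and $p_{\hat{\xv}}$, we get $\mathbb{E}\|\xv^*-\hat{\xv}\|^2\ge W_2^2(p_{\xv^*},p_{\hat{\xv}})$. By the triangle inequality for $W_2$,
\begin{equation*}
W_2(p_{\xv^*},p_{\hat{\xv}})\ge W_2(p_{\xv},p_{\xv^*})-W_2(p_{\xv},p_{\hat{\xv}})\ge P^*-P .
\end{equation*}
This gives $D(P)\ge D^*+[(P^*-P)_+]^2$.

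\textbf{Step 3: Achievability.} If $P\ge P^*$, take $\hat{\xv}=\xv^*$. For $P<P^*$, let $T$ be an optimal transport map (or plan) from $p_{\xv^*}$ to $p_{\xv}$, and set
\begin{equation*}
\hat{\xv}=(1-\lambda)\xv^*+\lambda T(\xv^*), \qquad \lambda = 1-P/P^* .
\end{equation*}
This is the $W_2$ displacement interpolation (McCann geodesic) between $p_{\xv^*}$ and $p_{\xv}$. Along a geodesic,
\begin{equation*}
W_2(p_{\hat{\xv}},p_{\xv})=(1-\lambda)P^*=P, \qquad \mathbb{E}\|\xv^*-\hat{\xv}\|^2=\lambda^2 P^{*2}=(P^*-P)^2 .
\end{equation*}
Because $\hat{\xv}$ is a function of $\xv^*$, and hence of $\tilde{\xv}$, the Markov condition of Step 1 holds and equality is attained.

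\textbf{Main obstacle.} The delicate step is the achievability construction. An optimal map $T$ may fail to exist when $p_{\xv^*}$ is not absolutely continuous. In that case we would use an optimal coupling $\pi$ and draw $T(\xv^*)$ from the conditional distribution $\pi(\cdot\mid\xv^*)$ using independent randomness. The randomized estimator still satisfies the Markov chain, so Step 1 applies. The interpolation identities continue to hold because the pushforward of $\pi$ under $(u,v)\mapsto((1-\lambda)u+\lambda v,\,v)$ remains optimal between its marginals. A second point is that the infimum over $(f,g)$ must be read as an infimum over estimators at the fixed rate. This requires that the decoder family be rich enough, with stochastic decoders allowed, to realize $\hat{\xv}$ from the code; otherwise the equality degrades to the lower bound of Step 2.
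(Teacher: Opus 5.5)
Your proof is correct and is essentially the argument of \cite{freirich2021theory}, to which the paper defers without giving a proof of its own. That argument has three steps: the orthogonality decomposition reducing the problem to transport from $p_{\xv^*}$, the triangle-inequality lower bound in $W_2$, and achievability via the interpolation $\frac{P}{P^*}\xv^*+\bigl(1-\frac{P}{P^*}\bigr)\hat{\xv}_0$, where $\hat{\xv}_0$ is drawn from an optimal coupling with $\xv^*$. Your caveats are apt clarifications of the paper's loosely stated $(f,g)$ formulation: randomized couplings are actually required here, since at finite rate $\xv^*$ is discrete, and the infimum should be read as one over possibly stochastic decoders with the encoder fixed.
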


\subsection{Perceptual Compression-based Denoising}
Given a lossy compression code $(f,g)$ with rate $R$, the decoder induces a codebook
$\mathcal{C}=\{\cv_m=g(m):m=1,\ldots,2^R\}\subset\mathbb{R}^n$.
For any $\xv\in\mathcal{Q}$, the reconstruction $\tilde{\xv}=g(f(\xv))$ satisfies the distortion-perception trade-off
$\|\xv-\tilde{\xv}\|^2=D(P)$ with $d_p(p_{\xv},p_{\tilde{\xv}})\le P$.

Building on~\cite{zafari2025zeroshot}, compression-based denoising can be interpreted as a structured maximum-likelihood (ML) estimator.
Given a noisy observation $\yv\sim\prod_{i=1}^n p(y_i|x_i)$, the denoised estimate is obtained by selecting the most likely codeword, $\xvh=\arg\min_{\cv_m\in\mathcal{C}}\;\Lc(\cv_m;\yv)$ and $\Lc(\cv_m;\yv)=-\sum_{i=1}^n\log p(y_i|c_{mi})$, 
% \begin{align*}
% \xvh=\arg\min_{\cv_m\in\mathcal{C}}\;\Lc(\cv_m;\yv); \hspace{0.1cm}
% \Lc(\cv_m;\yv)=-\sum_{i=1}^n\log p(y_i|c_{mi}),
% \end{align*}
where $\mathcal{C}$ is designed to represent clean signals.
This formulation exploits the fact that clean images admit more compact representations than noisy observations, yielding a structure-aware denoiser applicable across noise models. For additive white Gaussian noise (AWGN), $\yv=\xv+\nv$ with $\nv\sim\mathcal{N}(\mathbf{0},\sigma^2\mathbf{I}_n)$, the estimator reduces to
\begin{equation}
\xvh=\arg\min_{\cv_m\in\mathcal{C}}\;\|\yv-\cv_m\|^2,
\label{eq:comp-denoiser-awgn}
\end{equation}
i.e., projection of the noisy observation onto the nearest codeword in the compression codebook.

\subsection{Theoretical Result}
We analyze the performance of compression-based ML denoising under additive white Gaussian noise (AWGN).
The following theorem provides a non-asymptotic bound on the reconstruction error in terms of the compression rate, distortion, and perceptual quality.

\begin{theorem}
\label{thm:MSE_Gaussian_noise}
Let $\xv\in\mathcal{Q}$ and consider a lossy compression scheme $(f,g)$ operating at rate $R$, distortion $D$, and perception level $P$.
Given noisy observations $\yv=\xv+\nv$, where $\nv\sim\mathcal{N}(\mathbf{0},\sigma^2\mathbf{I}_n)$, let $\xvh$ denote the output of the compression-based ML denoiser in~\eqref{eq:comp-denoiser-awgn}.
Then, with probability at least $1-2^{-\eta R+2}$ for any $\eta\in(0,1)$,
\begin{align*}
\sqrt{D(P)} \leq \|\xv-\xvh\| \le \sqrt{D(P)}  + 2\sigma \sqrt{(2 \ln2) R}(1+2\sqrt{\eta}),
\end{align*}
Moreover, the error probability of the ML denoiser satisfies
\begin{align*}
P_e \le \frac{1}{2^R}\sum_{m=1}^{2^R}\sum_{\substack{v=1\\v\neq m}}^{2^R}
Q\!\left(
\frac{\frac{\|\cv_m-\cv_v\|}{2}
+\angles{\dv,\frac{\cv_v-\cv_m}{\|\cv_v-\cv_m\|}}}{\sigma}
\right),
\end{align*}
where $\dv=\tilde{\xv}-\xv$ and $Q(\cdot)$ denotes the Gaussian tail function.
\end{theorem}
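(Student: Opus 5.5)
The plan is to follow the standard analysis of compression-based ML denoising from~\cite{zafari2025zeroshot}, adapted so the "ideal" codeword is the reconstruction $\tilde{\xv}=g(f(\xv))$. This codeword satisfies $\|\xv-\tilde{\xv}\|^2=D(P)$ by the distortion-perception characterization of the previous subsection.

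\textbf{Lower bound.} We read $D(P)$ as the best squared distance from $\xv$ to the codebook that is compatible with the perception constraint. Then any codeword, in particular $\xvh\in\mathcal{C}$, satisfies $\|\xv-\xvh\|\ge\sqrt{D(P)}$. This step depends on the modeling convention that $\tilde{\xv}$ is the closest admissible codeword, and I would state that convention explicitly.

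\textbf{Upper bound.} By optimality of $\xvh$ in~\eqref{eq:comp-denoiser-awgn}, $\|\yv-\xvh\|^2\le\|\yv-\tilde{\xv}\|^2$. Substituting $\yv=\xv+\nv$ and expanding gives
\begin{align*}
\|\xv-\xvh\|^2 \le \|\xv-\tilde{\xv}\|^2 + 2\angles{\nv,\xvh-\tilde{\xv}}.
\end{align*}
The cross term cannot be handled pointwise because $\xvh$ depends on $\nv$. We therefore bound it uniformly over the codebook.

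\begin{itemize}
\item For fixed $m$, the quantity $\angles{\nv,\cv_m-\tilde{\xv}}/\|\cv_m-\tilde{\xv}\|$ is $\mathcal{N}(0,\sigma^2)$.
\item By a Gaussian tail bound and a union bound over the $2^R$ codewords, with probability at least $1-2^{-\eta R+1}$ every normalized inner product is at most $\sigma\sqrt{2\ln 2\,(1+\eta)R}$.
\item A second union-bound event of the same size handles the residual term from writing $\|\xvh-\tilde{\xv}\|\le\|\xvh-\xv\|+\sqrt{D(P)}$. The total failure probability is then at most $2^{-\eta R+2}$.
\end{itemize}

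Writing $e=\|\xv-\xvh\|$ and $t=\sigma\sqrt{(2\ln2)R}(1+2\sqrt\eta)$, which upper-bounds the inner-product threshold since $\sqrt{1+\eta}\le1+\sqrt\eta$, we get $e^2\le D+2t(e+\sqrt D)$. Hence $(e-t)^2\le(\sqrt D+t)^2$, so $e\le\sqrt{D}+2t$. This matches the stated bound after absorbing constants. I expect the main obstacle to be matching the exact constant $1+2\sqrt\eta$ and the factor $2^{2}$ in the probability. I would first try a single union bound with threshold $\sigma\sqrt{2\ln2\,R}(1+\sqrt\eta)$ and then loosen it as needed.

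\textbf{Error probability.} Define an error as $\xvh\neq\tilde{\xv}$. Condition on the true index being $m$, i.e.\ $\tilde{\xv}=\cv_m$, so $\yv=\cv_m-\dv+\nv$. A pairwise error toward $\cv_v$ occurs iff $\|\yv-\cv_v\|\le\|\yv-\cv_m\|$. This is equivalent to
\begin{align*}
\angles{\nv-\dv,u}\ge\|\cv_v-\cv_m\|/2, \qquad u=(\cv_v-\cv_m)/\|\cv_v-\cv_m\|.
\end{align*}
Since $\angles{\nv,u}\sim\mathcal{N}(0,\sigma^2)$, this event has probability $Q\big((\|\cv_m-\cv_v\|/2+\angles{\dv,u})/\sigma\big)$, up to the sign convention on $\dv$. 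A union bound over $v\neq m$, averaged over $m$ uniform on $\{1,\ldots,2^R\}$, gives the stated expression for $P_e$.
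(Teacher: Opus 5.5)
Your proposal is correct. The lower bound and the pairwise-error union bound for $P_e$ are the same as the paper's. For the lower bound, note that, as in the paper, it rests on $\xvt$ being the nearest point of the whole codebook $\Cc$, not merely of some ``admissible'' subset, because $\xvh$ is an arbitrary codeword.

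Your upper-bound argument takes a different and cleaner route. The paper splits the cross term into $2\langle\nv,\xvh-\xv\rangle$ and $-2\langle\nv,\xvt-\xv\rangle$. It controls the first with a two-sided uniform event over the directions $(\cv_m-\xv)/\|\cv_m-\xv\|$, threshold $t_1=\sigma\sqrt{2\ln 2\,(1+\eta)R}$. It controls the second with a separate single-direction event, threshold $t_2=\sigma\sqrt{2\ln 2\,\eta R}$. These two events are where $2^{-\eta R+2}$ comes from. The paper then solves $\|\ev\|^2\le D+2t_1\|\ev\|+2t_2\sqrt{D}$, relaxes to $\sqrt{D}+t_1+t_2+\sqrt{t_1^2-t_2^2}$, and loosens the constant $1+\sqrt{\eta}+\sqrt{1+\eta}$ to $2(1+2\sqrt{\eta})$.

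You instead keep the cross term whole as $2\langle\nv,\xvh-\xvt\rangle$. You use that the directions $\cv_m-\xvt$ are deterministic, since $\xvt=g(f(\xv))$ does not depend on $\nv$. You then need only one one-sided uniform event plus the triangle inequality. This gives $\|\xv-\xvh\|\le\sqrt{D(P)}+2\sigma\sqrt{2\ln 2\,(1+\eta)R}$ with failure probability at most $2^{R}\cdot 2^{-(1+\eta)R}=2^{-\eta R}$. That beats the paper's intermediate bound both in the constant (since $t_2+\sqrt{t_1^2-t_2^2}\ge t_1$) and in the probability.

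What the paper's split could buy is that the $\sqrt{D}$ cross term carries only the small threshold $t_2$. Solving its quadratic exactly gives $t_1+\sqrt{t_1^2+D+2t_2\sqrt{D}}$. This is never worse than your $\sqrt{D}+2t_1$ and is noticeably better when $\sqrt{D}\gg t_1$, but the paper's relaxation discards that gain.

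Several of your hedges can be dropped:
\begin{itemize}
\item The ``second union-bound event'' is unnecessary, because $\|\xvh-\xvt\|\le\|\xvh-\xv\|+\sqrt{D(P)}$ holds surely.
\item No constants need absorbing: with your $t=\sigma\sqrt{(2\ln 2)R}(1+2\sqrt{\eta})$, the bound $\sqrt{D}+2t$ is exactly the stated one.
\item There is no sign ambiguity in the error-probability part. With $\dv=\xvt-\xv$ and $u=(\cv_v-\cv_m)/\|\cv_v-\cv_m\|$, the event $\langle\nv-\dv,u\rangle\ge\|\cv_v-\cv_m\|/2$ has probability exactly $Q\bigl((\|\cv_m-\cv_v\|/2+\langle\dv,u\rangle)/\sigma\bigr)$, which is the theorem's expression.
\end{itemize}
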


\begin{proof}
The proof is provided in Appendix~\ref{app:proof_MSE_Gaussian_noise}.
\end{proof}

\begin{corollary}
Since $\|\dv\|=\sqrt{D(P)}$, a worst-case bound is obtained by
$\angles{\dv,\frac{\cv_v-\cv_m}{\|\cv_v-\cv_m\|}}\ge -\|\dv\|$, yielding
\begin{align*}
P_e \le \frac{1}{2^R}\sum_{m=1}^{2^R}\sum_{\substack{v=1\\v\neq m}}^{2^R} \!\!
Q\!\left(
\frac{\frac{\|\cv_m-\cv_v\|}{2}
-\sqrt{D(P)}}{\sigma}
\right) \!\!.
\end{align*}
\end{corollary}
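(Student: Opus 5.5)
The statement to prove is the Corollary, and it follows directly from the second part of Theorem~\ref{thm:MSE_Gaussian_noise}. We take the pairwise bound on $P_e$ as given. The argument is a termwise comparison inside the double sum, using three ingredients: Cauchy--Schwarz, the distortion identity, and the monotonicity of $Q$.

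\textbf{Step 1 (lower-bound the inner product).} Fix any pair $m\neq v$ and set $\mathbf{u}_{mv}=(\cv_v-\cv_m)/\|\cv_v-\cv_m\|$. This is well defined because the codewords are distinct; if duplicates existed we would first merge them, which does not change the ML decoder. Since $\|\mathbf{u}_{mv}\|=1$, Cauchy--Schwarz gives
\begin{equation*}
\angles{\dv,\mathbf{u}_{mv}}\ \ge\ -\|\dv\|\,\|\mathbf{u}_{mv}\| = -\|\dv\|.
\end{equation*}

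\textbf{Step 2 (substitute the distortion).} By construction of the codec, $\dv=\tilde{\xv}-\xv$ satisfies $\|\xv-\tilde{\xv}\|^2=D(P)$, as stated in the perceptual compression-based denoising subsection. Hence $\|\dv\|=\sqrt{D(P)}$. The argument of each $Q$-term then obeys
\begin{equation*}
\frac{\frac{\|\cv_m-\cv_v\|}{2}+\angles{\dv,\mathbf{u}_{mv}}}{\sigma}\ \ge\ \frac{\frac{\|\cv_m-\cv_v\|}{2}-\sqrt{D(P)}}{\sigma},
\end{equation*}
where we use $\sigma>0$.

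\textbf{Step 3 (monotonicity of $Q$).} The tail function $Q(t)=\int_t^\infty \frac{1}{\sqrt{2\pi}}e^{-s^2/2}\,ds$ is nonincreasing on all of $\mathbb{R}$, including negative arguments. Therefore each summand in the bound of Theorem~\ref{thm:MSE_Gaussian_noise} is at most the corresponding summand with $-\sqrt{D(P)}$ in place of the inner product. Summing over all $v\neq m$ and averaging over $m$ with weight $2^{-R}$ preserves the inequality, which yields the claimed bound.

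There is no real obstacle here. The only point to check is that the termwise comparison remains valid when the shifted argument is negative, which can happen when $\|\cv_m-\cv_v\|<2\sqrt{D(P)}$. In that case the bound is loose, since the summand can exceed $1/2$, but it is still correct because $Q$ is monotone over the whole real line.
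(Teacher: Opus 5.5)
Your proposal is correct and follows the paper's own route: a Cauchy--Schwarz lower bound $\langle \dv, (\cv_v-\cv_m)/\|\cv_v-\cv_m\|\rangle \ge -\|\dv\|$, the substitution $\|\dv\|=\sqrt{D(P)}$, and monotonicity of $Q$ (left implicit in the paper), applied termwise to the bound of Theorem~\ref{thm:MSE_Gaussian_noise}. Your remarks on distinct codewords and on negative arguments of $Q$ are harmless clarifications that the paper does not state.
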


%===========================================================================================
\section{Generative Compression-based Image Denoising}
  
In this section, we propose learning-based denoisers that prioritize perceptual quality by compressing noisy observations with neural generative compression models, where denoising is performed by reconstructing from a compact latent representation.

\subsection{Perceptual Denoising via Conditional WGAN-based Compression}
\label{subsec:cgan_compress}

\begin{figure}[t]
    \centering
    \includegraphics[width=0.46\textwidth]{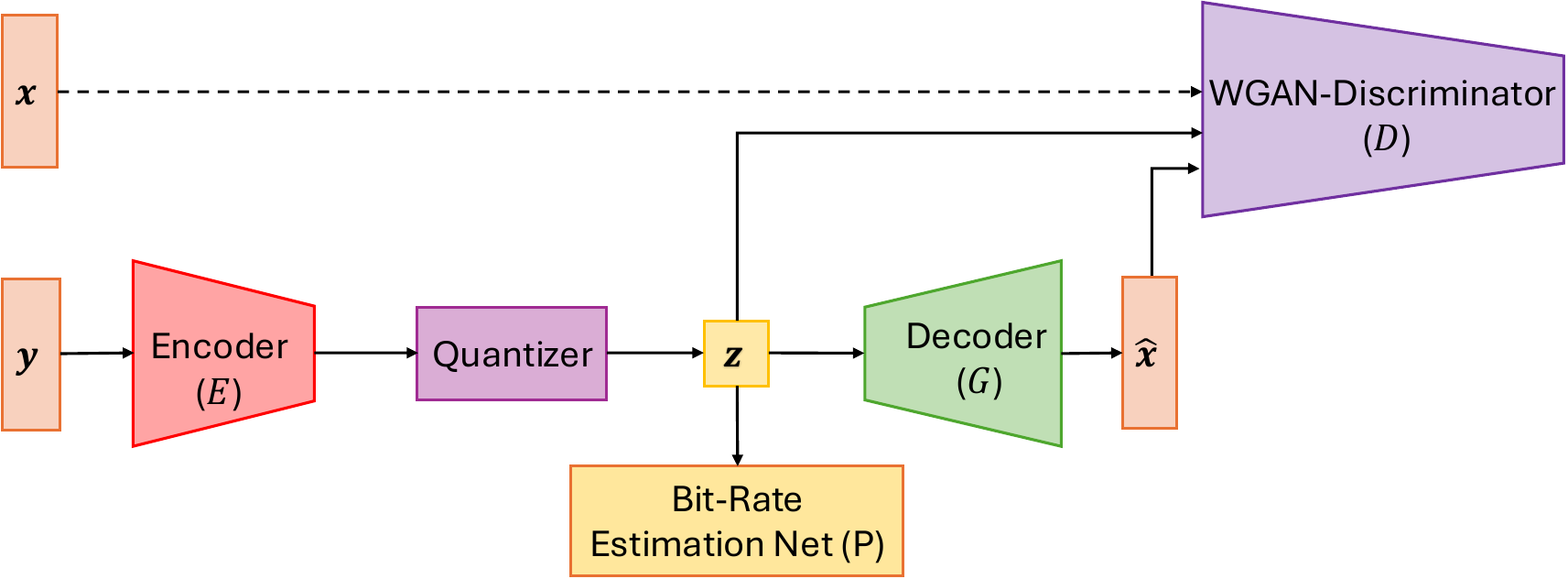}
    \caption{Conditional WGAN-based denoising framework.}
    \label{fig:architecture_cgan}
    \vspace{-0.5cm}
\end{figure}

Our approach builds on high-fidelity generative image compression~\cite{mentzer2020high} and integrates learned lossy compression with conditional adversarial training~\cite{mirza2014conditional}, as illustrated in Fig.~\ref{fig:architecture_cgan}.
An encoder-decoder pair $(E,G)$ maps a noisy input $\yv$ to a quantized latent representation $\zv=E(\yv)$ and reconstructs a denoised image $\hat{\xv}=G(\zv)$.
The latent $\zv$ is entropy-coded using a learned probability model $P$, yielding a rate $r(\zv)=-\log P(\zv)$.

To promote perceptual realism, we use a conditional Wasserstein GAN-based discriminator $D$ that distinguishes reconstructed images from clean images conditioned on $\zv$, following the non-saturating conditional WGAN objective \cite{arjovsky2017wasserstein, mirza2014conditional}.
The reconstruction distortion is defined as
\begin{equation*}
d(\xv,\hat{\xv})
=
k_M\,\mathrm{MSE}(\xv,\hat{\xv})
+
k_P\,\mathrm{LPIPS}(\xv,\hat{\xv}),
\end{equation*}
where LPIPS captures perceptual discrepancies not well reflected by pixel-wise losses~\cite{ding2020image}. The encoder, decoder, and entropy model are jointly optimized by minimizing
\begin{equation*}
\mathcal{L}_{EGP}
=
\mathbb{E}_{\yv}
\!\left[
\lambda\,r(\zv)
+
d(\xv,\hat{\xv})
-
\beta\log D(\hat{\xv},\zv)
\right],
\end{equation*}
where $\lambda$ controls the rate-distortion trade-off and $\beta$ regulates perceptual quality.
The discriminator is trained using
\begin{equation*}
\mathcal{L}_D
=
\mathbb{E}_{\xv}\!\left[-\log D(\xv,\zv)\right]
+
\mathbb{E}_{\yv}\!\left[-\log\!\left(1-D(\hat{\xv},\zv)\right)\right].
\end{equation*}

This formulation enables explicit control over the RDP trade-off for perceptual image denoising~\cite{blau2019rethinking}.

\subsection{Perceptual Denoising via Conditional Diffusion-based Compression}

\begin{figure}[t]
    \centering
    \includegraphics[width=0.45\textwidth]{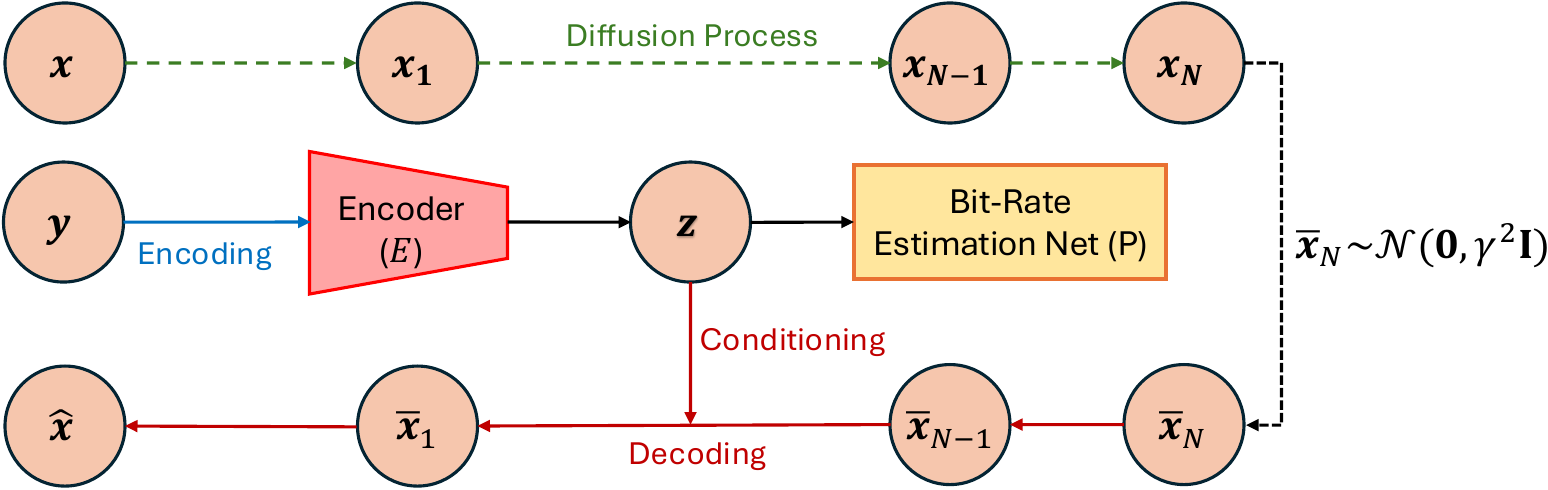}
    \caption{Conditional diffusion-based denoising architecture.}
    \label{fig:architecture_diffusion}
\end{figure}

\begin{figure*}[t]
    \centering
    \includegraphics[width=0.97\textwidth]{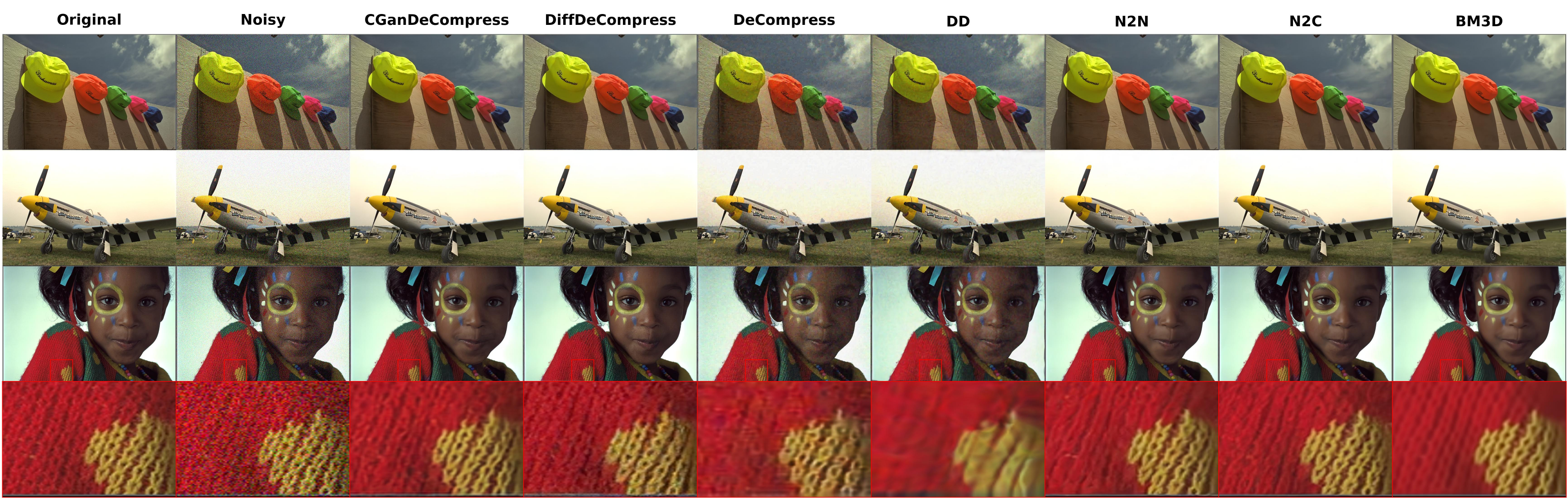}
    \caption{\small Visual comparison of different denoising methods on randomly selected KODAK images with noise level $\sigma=25$.
    The first two columns show the original images and their noisy observations, followed by results generated by our methods and popular baselines.
    The last two rows provide zoomed-in views for detailed comparison.
    Best viewed on screen.}
    \label{fig:methods_comparison_kodak_25}
\end{figure*}

\subsubsection{Denoising Diffusion Models}
Denoising diffusion models generate data by reversing a gradual noise corruption process~\cite{ho2020denoising}.
Starting from a clean image $\xv_0 = \xv$, the forward process produces noisy samples $\{\xv_n\}_{n=1}^N$ according to
\begin{equation}
q(\xv_n|\xv_{n-1})
= \mathcal{N}\!\left(\xv_n | \sqrt{1-\beta_n}\,\xv_{n-1}, \beta_n \mathbf{I}\right),
\end{equation}
where $\{\beta_n\}$ is a variance schedule.
This admits the closed form $\xv_n=\sqrt{\alpha_n}\xv_0+\sqrt{1-\alpha_n}\epsilon$, with $\epsilon\sim\mathcal{N}(\mathbf{0},\mathbf{I})$ and $\alpha_n=\prod_{i=1}^n(1-\beta_i)$.
As $n$ increases, $\xv_n$ converges to Gaussian noise. The reverse process is modeled as
\begin{equation}
p_\theta(\xv_{n-1}|\xv_n)
= \mathcal{N}\!\left(\xv_{n-1} | M_\theta(\xv_n,n), \beta_n \mathbf{I}\right),
\end{equation}
and is trained by predicting the injected noise using the Denoising Diffusion Probabilistic Model (DDPM) objective~\cite{ho2020denoising}
\begin{equation}
\mathcal{L}_{\mathrm{DDPM}}(\theta)
=
\mathbb{E}_{\xv_0,n,\epsilon}
\left\|
\epsilon-\epsilon_\theta(\xv_n,n)
\right\|^2.
\end{equation}
In inference, denoised samples are obtained by iteratively applying the learned reverse transitions, optionally using deterministic sampling.

\begin{figure*}[t]
    \centering
    \includegraphics[width=0.97\textwidth]{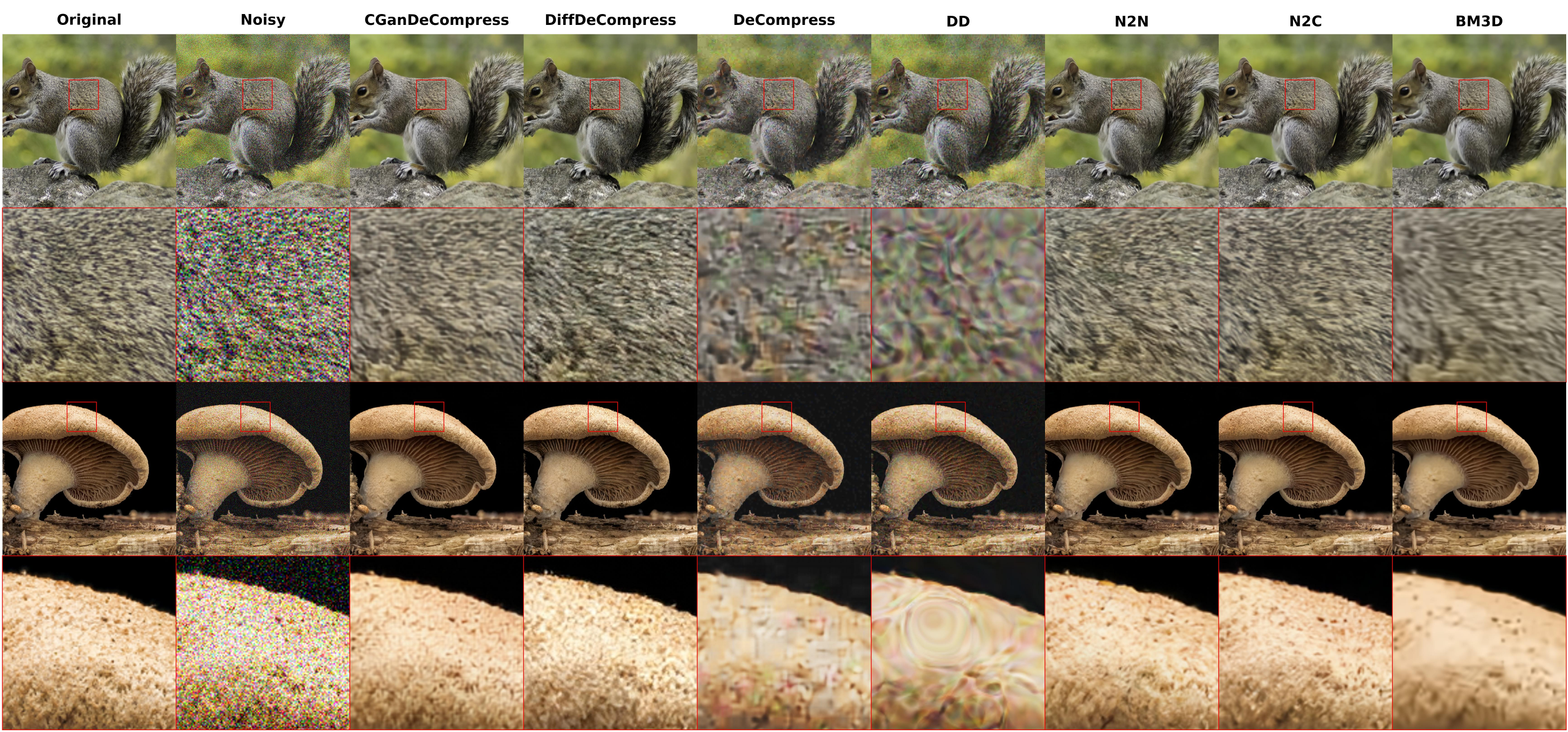}
    \caption{\small Visual comparison of denoising methods on randomly selected DIV2K images at noise level $\sigma=50$.
    The first two columns present the original images and their noisy observations, followed by the denoised results generated by our methods and popular baselines.
    Zoomed-in regions are shown in the last two rows for detailed inspection.
    Best viewed on screen.}
    \label{fig:methods_comparison_div2k_50}
\end{figure*}

\subsubsection{Conditional Diffusion for Perceptual Denoising}
To enable perceptual denoising, we adopt a conditional diffusion framework inspired by diffusion-based compression~\cite{yangmandt2024cdc}, as illustrated in Fig.~\ref{fig:architecture_diffusion}.
A noisy observation $\yv$ is encoded into a compact latent representation $\zv$ via a stochastic encoder $E(\zv|\yv)$.
Conditioned on $\zv$, the decoder models $p(\xv_0|\zv)$ through a diffusion process,
\begin{equation*}
p_\theta(\xv_{0:N}|\zv)
=
p(\xv_N)
\prod_{n=1}^{N}
\mathcal{N}\!\left(\xv_{n-1} | M_\theta(\xv_n,\zv,n), \beta_n \mathbf{I}\right),
\end{equation*}
where conditioning guides the denoising trajectory toward semantically consistent reconstructions, while intermediate latent variables enable the synthesis of fine-scale details.

Training minimizes a rate-distortion objective
\[
\mathbb{E}_{\zv\sim E(\zv|\xv_0)}
\big[
-\log p(\xv_0|\zv) - \lambda \log p(\zv)
\big],
\]
where $\lambda$ controls the rate constraint.
Since $p(\xv_0|\zv)$ is intractable, we optimize a variational upper bound on the diffusion negative log-likelihood~\cite{ho2020denoising},
\begin{equation*}
L_{\mathrm{upper}}(\xv_0|\zv)
=
\mathbb{E}_{\xv_0,n,\epsilon}
\frac{\alpha_n}{1-\alpha_n}
\left\|
\xv_0-\mathcal{X}_\theta(\xv_n,\zv,n/N)
\right\|^2.
\end{equation*}

To further enhance perceptual quality, we also incorporate an explicit LPIPS loss~\cite{ding2020image}.
Let $\bar{\xv}_0=\mathcal{X}_\theta(\xv_n,\zv,n/N)$ denote the decoded image, and define the total objective as
$L=\rho L_p+(1-\rho)L_c$, where
\[
L_p=\mathbb{E}[\mathrm{LPIPS}(\bar{\xv}_0,\xv_0)]; \hspace{0.01cm}
L_c=L_{\mathrm{upper}}(\xv_0|\zv)-\tfrac{\lambda}{1-\rho}\log P(\zv).
\]
This yields a perception-aware diffusion denoiser suitable for real-world image restoration.

\begin{table*}[t]
\footnotesize
\centering
\setlength{\tabcolsep}{3.0pt}        % tighter columns
\renewcommand{\arraystretch}{1.05}   % tighter rows

\begin{tabular}{c|l|ccccc|ccccc}
\hline
\multirow{2}{*}{\textbf{Noise}}
& \multirow{2}{*}{\textbf{Method}}
& \multicolumn{5}{c|}{\textbf{KODAK}}
& \multicolumn{5}{c}{\textbf{DIV2K}} \\
\cline{3-12}
&
& \textbf{FID}$\downarrow$
& \textbf{LPIPS}$\downarrow$
& \textbf{PI}$\downarrow$
& \textbf{PSNR}$\uparrow$
& \textbf{SSIM}$\uparrow$
& \textbf{FID}$\downarrow$
& \textbf{LPIPS}$\downarrow$
& \textbf{PI}$\downarrow$
& \textbf{PSNR}$\uparrow$
& \textbf{SSIM}$\uparrow$ \\
\hline\hline

% ========================= Sigma = 15 =========================
\multirow{8}{*}{$\sigma=15$}

& Noisy
& 90.2527 & 0.3225 & 4.4103 & 24.7506 & 0.6594
& 84.9896 & 0.2951 & 4.7530 & 24.9004 & 0.6926 \\

& BM3D~\cite{dabov2007bm3d}
& 30.7647 & 0.1449 & 3.2296 & 34.3446 & 0.9292
& 17.1473 & 0.0950 & 3.8459 & \textbf{34.9628} & \textbf{0.9564} \\

& N2C~\cite{WangDenoise2023}
& \underline{15.2816} & \underline{0.0710} & \underline{2.3475} & \underline{35.0344} & \underline{0.9323}
& \underline{9.3033} & \underline{0.0480} & 3.1467 & \underline{34.7184} & \underline{0.9515} \\

& N2N~\cite{lehtinen2018noise2noise}
& 15.5294 & 0.0760 & 2.4213 & \textbf{35.0567} & \textbf{0.9325}
& 9.8992 & 0.0508 & 3.1958 & 34.7343 & 0.9518 \\

& DD~\cite{heckel2018deep}
& 63.4353 & 0.3352 & 3.5795 & 27.4176 & 0.7881
& 40.2202 & 0.2862 & 4.6408 & 27.4888 & 0.7985 \\

& DeCompress~\cite{zafari2025decompress}
& 36.0712 & 0.1516 & 2.5735 & 29.4717 & 0.8494
& 21.5690 & 0.1399 & \underline{3.0150} & 29.5929 & 0.8704 \\

& \textbf{DiffDeCompress   }
& 15.4075 & 0.0722 & \textbf{2.2528} & 29.2386 & 0.8441
& 10.5465 & 0.0528 & \textbf{2.8880} & 28.9554 & 0.8780 \\

& \textbf{CGanDeCompress   }
& \textbf{12.9149} & \textbf{0.0495} & 2.6206 & 32.6086 & 0.9115
& \textbf{8.4437} & \textbf{0.0359} & 3.2627 & 31.5862 & 0.9315 \\

\hline\hline

% ========================= Sigma = 25 =========================
\multirow{8}{*}{$\sigma=25$}

& Noisy
& 157.7336 & 0.5638 & 5.7074 & 20.4386 & 0.4701
& 129.4306 & 0.4980 & 5.9286 & 20.6694 & 0.5225 \\

& BM3D
& 51.7303 & 0.2214 & 3.8550 & 31.1730 & 0.8771
& 32.4618 & 0.1601 & 4.3132 & 31.1874 & 0.9187 \\

& N2C
& \underline{23.3011} & 0.1106 & \underline{2.5141} & \textbf{32.6731} & \textbf{0.8948}
& \underline{16.7147} & \underline{0.0799} & 3.3093 & \textbf{32.2684} & \textbf{0.9217} \\

& N2N
& 27.6703 & 0.1223 & 2.6249 & \underline{32.6552} & \underline{0.8939}
& 20.0348 & 0.0882 & 3.4357 & \underline{32.2513} & \underline{0.9213} \\

& DD
& 83.7497 & 0.4210 & 3.6106 & 26.4950 & 0.7555
& 48.0839 & 0.3365 & 4.5886 & 26.6470 & 0.7732 \\

& DeCompress
& 65.0016 & 0.3651 & 3.1791 & 26.8033 & 0.7647
& 36.3198 & 0.2397 & \underline{3.2863} & 27.1986 & 0.8095 \\

& \textbf{DiffDeCompress   }
& 24.0302 & \underline{0.1076} & \textbf{2.2421} & 27.8702 & 0.8039
& 16.8973 & 0.0810 & \textbf{2.9319} & 27.5729 & 0.8428 \\

& \textbf{CGanDeCompress   }
& \textbf{20.8924} & \textbf{0.0742} & 2.7188 & 30.9462 & 0.8712
& \textbf{14.6683} & \textbf{0.0567} & 3.3520 & 29.9438 & 0.8991 \\

\hline\hline

% ========================= Sigma = 50 =========================
\multirow{8}{*}{$\sigma=50$}

& Noisy
& 280.5622 & 0.9680 & 8.2075 & 14.8716 & 0.2509
& 200.2227 & 0.8977 & 8.3045 & 15.1962 & 0.3047 \\

& BM3D
& 82.7551 & 0.3307 & 5.1690 & 28.8828 & 0.7935
& 62.0448 & 0.2659 & 5.2739 & 28.5877 & 0.8432 \\

& N2C
& 40.3229 & 0.1848 & 2.8118 & \textbf{29.6663} & \textbf{0.8228}
& 33.6093 & 0.1445 & 3.5777 & \textbf{29.0610} & \textbf{0.8584} \\

& N2N
& 42.4373 & 0.1772 & \underline{2.5716} & \underline{29.5478} & \underline{0.8201}
& \underline{33.4033} & \underline{0.1431} & \underline{3.4075} & \underline{28.9330} & \underline{0.8563} \\

& DD
& 137.4983 & 0.6104 & 3.8901 & 23.7966 & 0.6506
& 81.3767 & 0.4980 & 4.5994 & 23.8478 & 0.6858 \\

& DeCompress
& 130.1298 & 0.5541 & 4.2835 & 23.9714 & 0.6544
& 79.8575 & 0.4558 & 4.6101 & 23.5542 & 0.6839 \\

& \textbf{DiffDeCompress   }
& \textbf{34.9944} & \textbf{0.1076} & \textbf{2.2421} & 27.8702 & 0.8039
& 33.9302 & 0.1478 & \textbf{3.3153} & 25.5450 & 0.7783 \\

& \textbf{CGanDeCompress   }
& \underline{36.6453} & \underline{0.1299} & 2.9745 & 28.5292 & 0.7970
& \textbf{29.5434} & \textbf{0.1069} & 3.6599 & 27.5023 & 0.8319 \\

\hline
\end{tabular}

\caption{
\small Performance on KODAK and DIV2K with Gaussian noise $\mathcal{N}(0,\sigma^2)$ for $\sigma \in \{15,25,50\}$.
All models are trained on OpenImages.
Best results are shown in \textbf{bold}, and second-best results are \underline{underlined}.
}
\label{tab:kodak_div2k_sigma15_25_50}
\end{table*}

%===========================================================================================
\section{Experimental Results}\label{sec:experiments}
We evaluate the denoising performance of the proposed generative compression-based methods on both synthetic and real-world noise, using natural and microscopy image datasets.
Comparisons are conducted against representative classical, learning-based, and compression-based denoisers.
Baselines include the traditional BM3D~\cite{dabov2007bm3d}, learning-based methods Noise2Clean (N2C)~\cite{WangDenoise2023}, Noise2Noise (N2N)~\cite{lehtinen2018noise2noise}, and Deep Decoder (DD)~\cite{heckel2018deep}, as well as the recent compression-based approach DeCompress~\cite{zafari2025decompress}.

\textbf{Metric Evaluation.}
We report PSNR and SSIM~\cite{wang2004image} to evaluate distortion fidelity.
Perceptual quality is assessed using LPIPS, DISTS, FID, and PI, following standard practices in perceptual image quality evaluation~\cite{ding2020image}.
LPIPS is computed using an AlexNet-based feature extractor and correlates well with human perceptual judgments, while DISTS measures perceptual similarity by jointly modeling structural and texture information.
FID captures distributional discrepancies in deep feature space, and PI aggregates no-reference quality metrics, with lower values indicating better perceptual realism.

\subsection{Natural images with synthetic Gaussian noise}
\textbf{Dataset Description.}
All models are trained on the OpenImages dataset~\cite{OpenImages}.
For evaluation, we consider two standard natural image benchmarks: Kodak, which contains 24 high-quality images at resolution $768\times512$, and DIV2K, for which we use the validation set of 100 high-resolution images ~\cite{Agustsson_2017_CVPR_Workshops}.
For DIV2K, each image is resized such that its shorter side is 768 pixels, followed by center cropping to $768\times768$.

\textbf{Model Training.} 
Conditional WGAN-based compression model (\emph{CGanDeCompress}) are trained for $8$ epochs with batch size $8$ using the Adam optimizer. We use an initial learning rate of $1\times10^{-4}$ with weight decay $1\times10^{-6}$, and apply a step learning-rate schedule that decays the rate by a factor of $0.1$ after $5\times10^5$ steps. For comparison, the conditional diffusion-based compression model (\emph{DiffDeCompress}) employs $N_{\text{train}}=8{,}193$ diffusion steps with a cosine noise schedule, batch size $4$, and Adam optimization. Its learning rate is initialized at $5\times10^{-5}$, reduced by $20\%$ every $100{,}000$ steps, and clipped to $2\times10^{-5}$. All models are trained using the PyTorch framework on two NVIDIA RTX~3090 GPUs under Ubuntu.

Table~\ref{tab:kodak_div2k_sigma15_25_50} reports denoising results on KODAK and DIV2K under additive Gaussian noise with $\sigma\in\{15,25,50\}$. 
Across all noise levels, the proposed generative compression-based denoisers, CGanDeCompress and DiffDeCompress, consistently outperform classical and learning-based baselines in terms of perceptual quality metrics (FID, LPIPS, and PI), while exhibiting the expected trade-off with pixel-fidelity measures (PSNR and SSIM), which are typically favored by supervised denoisers such as N2C and N2N.

For light noise ($\sigma=15$), CGanDeCompress achieves the best perceptual similarity on both datasets, attaining the lowest FID and LPIPS on KODAK (FID $12.91$, LPIPS $0.0495$) and DIV2K (FID $8.44$, LPIPS $0.0359$). 
In contrast, N2C and N2N yield the highest PSNR/SSIM (e.g., $\approx35$ dB on KODAK), but with substantially higher FID and LPIPS, indicating less perceptually realistic reconstructions. DiffDeCompress achieves the lowest PI on both datasets (PI $2.25$ on KODAK and $2.89$ on DIV2K), highlighting its strength in perceptual realism while maintaining the competitive distortion performance.

As the noise level increases ($\sigma=25,50$), similar trends persist.
At $\sigma=25$, CGanDeCompress remains the strongest in FID and LPIPS on both datasets (e.g., DIV2K FID $14.67$, LPIPS $0.0567$), while DiffDeCompress consistently achieves the lowest PI (KODAK PI $2.24$, DIV2K PI $2.93$).
At the highest noise level ($\sigma=50$), DiffDeCompress provides the best perceptual quality on KODAK across all perceptual metrics, whereas CGanDeCompress achieves the best FID and LPIPS on DIV2K (FID $29.54$, LPIPS $0.1069$), with DiffDeCompress still yielding the lowest PI.

Qualitative comparisons in Fig.~\ref{fig:methods_comparison_kodak_25} and Fig.~\ref{fig:methods_comparison_div2k_50} illustrate representative denoising artifacts at noise level $\sigma=25$ on KODAK and $\sigma=50$ on DIV2K, respectively.
Compared to the proposed approaches, existing methods often yield over-smoothed reconstructions and fail to recover high-frequency textures. 
Such artifacts are typically linked to MSE/PSNR-based optimization, which biases estimates toward average image statistics and suppresses fine structures~\cite{blau2019rethinking}. 
In contrast, our method preserves subtle details and produces more perceptually faithful restorations. 

Overall, these results demonstrate that our frameworks enable a favorable perceptual denoising regime, with CGanDeCompress excelling in distributional and perceptual similarity by FID, and DiffDeCompress excelling in perceptual realism as quantified by PI.

% \begin{table}[t]
% \footnotesize
% \centering
% \setlength{\tabcolsep}{3.0pt}
% \renewcommand{\arraystretch}{1.05}

% \begin{tabular}{c|l|cc}
% \hline
% \textbf{Dataset}
% & \textbf{Method}
% & \textbf{LPIPS / DISTS}$\downarrow$
% & \textbf{PSNR / SSIM}$\uparrow$ \\
% \hline\hline

% \multirow{2}{*}{\textbf{FMD}}
% & Noisy
% & 0.439 / 0.344
% & 28.45 / 0.582 \\

% & \textbf{CGanDeCompress}
% & \textbf{0.050 / 0.140}
% & \textbf{35.14 / 0.916} \\

% \hline

% \multirow{2}{*}{\textbf{SIDD}}
% & Noisy
% & 0.646 / 0.318
% & 26.35 / 0.506 \\

% & \textbf{CGanDeCompress}
% & \textbf{0.104 / 0.121}
% & \textbf{32.24 / 0.889} \\

% \hline
% \end{tabular}

% \caption{
% \small Performance on real fluorescence microscopy (FMD) and real camera images (SIDD).
% }
% \label{tab:fmd_sidd_real_denoising}
% \end{table}

\begin{table*}[t]
\footnotesize
\centering
\setlength{\tabcolsep}{3.2pt}
\renewcommand{\arraystretch}{1.05}

\begin{tabular}{c|l|cccc}
\hline
\textbf{Dataset}
& \textbf{Method}
& \textbf{LPIPS}$\downarrow$
& \textbf{DISTS}$\downarrow$
& \textbf{PSNR}$\uparrow$
& \textbf{SSIM}$\uparrow$ \\
\hline\hline

\multirow{2}{*}{\textbf{FMD}}
& Noisy
& 0.439
& 0.344
& 28.45
& 0.582 \\

& \textbf{CGanDeCompress}
& \textbf{0.035}
& \textbf{0.140}
& \textbf{36.71}
& \textbf{0.934} \\

\hline

\multirow{2}{*}{\textbf{SIDD}}
& Noisy
& 0.646
& 0.318
& 26.35
& 0.506 \\

& \textbf{CGanDeCompress}
& \textbf{0.093}
& \textbf{0.121}
& \textbf{32.70}
& \textbf{0.907} \\

\hline
\end{tabular}

\caption{
\small Performance on real fluorescence microscopy (FMD) and real camera images (SIDD).
}
\label{tab:fmd_sidd_real_denoising}
\vspace{-0.4cm}
\end{table*}

\subsection{Fluorescence microscopy and real camera images}
\textbf{Training Datasets~\cite{WangDenoise2023}.}
We evaluate denoising performance under real noise and domain shift using fluorescence microscopy and real camera datasets.
For microscopy, we use the Fluorescence Microscopy Dataset (FMD), which contains approximately $12{,}000$ fluorescence microscopy images of different samples obtained with different microscopes.
The pseudo ground-truth is collected by the image average strategy. 
For real camera noise, we consider the Smartphone Image Denoising Dataset (SIDD), consisting of roughly $30{,}000$ noisy images across $10$ scenes under different lighting conditions, with clean references generated via multi-frame averaging.

Table~\ref{tab:fmd_sidd_real_denoising} reports denoising performance under real noise and severe domain shift on FMD and SIDD.
Compared to the noisy inputs, CGanDeCompress achieves consistent and substantial improvements across all reported metrics.
On FMD, LPIPS is reduced from $0.439$ to $0.035$ while PSNR increases from $28.45$\,dB to $36.71$\,dB, accompanied by a significant SSIM improvement from $0.582$ to $0.934$, indicating effective noise suppression while preserving fine structural details.

On SIDD, CGanDeCompress further improves perceptual and structural quality, reducing LPIPS from $0.646$ to $0.093$ and increasing PSNR from $26.35$\,dB to $32.70$\,dB.
Structural fidelity is substantially enhanced, with SSIM rising from $0.506$ to $0.907$, while perceptual distortion measured by DISTS decreases from $0.318$ to $0.121$.
These results demonstrate the robustness of the proposed conditional WGAN-based generative compression framework to complex real-world noise and challenging domain shifts.
Overall, CGanDeCompress generalizes effectively across both microscopy and camera imaging scenarios, producing denoised outputs that are perceptually faithful and structurally consistent under realistic noise conditions.
\vspace{-0.35cm}

%===========================================================================================
\section{Conclusions}
\label{sec:disc}
\vspace{-0.3cm}

This paper presented a generative compression framework for perception-based image denoising, where restoration is achieved by reconstructing from entropy-coded latent representations. 
Two complementary instantiations were introduced: CGanDeCompress, which controls the RDO trade-off via conditional adversarial training, and DiffDeCompress, which performs diffusion-based reconstruction guided by compressed latents to enhance perceptual realism. 
Non-asymptotic performance guarantees were also established for compression-based ML denoising under additive Gaussian noise, including bounds on reconstruction error and decoding error probability. 
Experiments on synthetic and real-noise benchmarks demonstrate consistent perceptual improvements while maintaining competitive distortion performance, highlighting the robustness of generative compression for perceptual denoising.

%===========================================================================================
\clearpage
\newpage
\bibliographystyle{IEEEbib}
\bibliography{main}

%===========================================================================================
\clearpage
\appendix

\section{Proof of Theoretical Result}
\label{app:additional_proofs}

%===========================================================================================
\subsection{Proof of Theorem \ref{thm:MSE_Gaussian_noise}}
\label{app:proof_MSE_Gaussian_noise}

\textbf{Part I.}  
The proof approach is based on the technique introduced in~\cite{zafari2025zeroshot}. Recall that the noisy observation is given by $\yv = \xv + \nv$, where $\nv \sim \iid \mathcal{N}(\mathbf{0}, \sigma^2 \mathbf{I}_n)$. Define the compression-based estimator and the oracle reconstruction as
\[
   \xvh = \arg \min_{\cv_m \in \Cc} \|\cv_m - \yv\|^2, 
   \quad  
   \xvt = \arg \min_{\cv_m \in \Cc} \|\cv_m - \xv\|^2 .
\]

Since both $\xvh$ and $\xvt$ belong to the codebook $\Cc$, the optimality of $\xvh$ implies
\begin{align*}
    &\|\hat{\xv} - \yv\|^2 \leq \|\Tilde{\xv} - \yv\|^2 \\
    &\|(\hat{\xv} - \xv) - \nv\|^2 \leq \|(\Tilde{\xv} - \xv) - \nv\|^2 \\
    &\|\hat{\xv} - \xv\|^2 
    \leq \|\Tilde{\xv} - \xv\|^2 
    - 2 \angles{\nv,\Tilde{\xv} - \xv} 
    + 2 \angles{\nv,\hat{\xv} - \xv} \\
    &\|\hat{\xv} - \xv\|^2 
    \leq \|\Tilde{\xv} - \xv\|^2 
    + 2 \abs*{\angles{\nv,\Tilde{\xv}-\xv}} 
    + 2 \abs*{\angles{\nv,\hat{\xv}-\xv}} .
\end{align*}

Let $\ev = \hat{\xv} - \xv$ denote the estimation error induced by compression from the noisy observation $\yv$, and let $\dv = \Tilde{\xv} - \xv$ denote the distortion incurred by compressing the clean signal $\xv$. Then,
\begin{align}
    \norm{\ev}^2 
    &\le \norm{\dv}^2 
    + 2\abs{\angles{\nv, \ev}} 
    + 2\abs{\angles{\nv, \dv}} \nonumber\\
    &= \norm{\dv}^2 
    + 2\norm{\ev}\abs*{\angles{\nv, \frac{\ev}{\norm{\ev}}}} 
    + 2\norm{\dv}\abs*{\angles{\nv, \frac{\dv}{\norm{\dv}}}} .
    \label{eq:error-bound-gaussian}
\end{align}

For any reconstruction $\cv_m \in \Cc$, define the corresponding error vector $\ev^{(\cv_m)} = \cv_m - \xv$. For $t_1,t_2 > 0$, introduce the events
\begin{align*}
    \Ec_1 
    &= \braces*{ 
    \abs*{\sum^n_{i=1} z_i \frac{e_i^{(\cv_m)}}{\norm{\ev^{(\cv_m)}}}} \leq t_1 
    :\; \cv_m \in\Cc }, \\
    \Ec_2 
    &= \braces*{ 
    \abs*{\sum^n_{i=1} z_i \frac{e_i^{(\xvt)}}{\norm{\ev^{(\xvt)}}}} \leq t_2 } .
\end{align*}

Conditioned on the event $\Ec_1 \cap \Ec_2$, inequality~\eqref{eq:error-bound-gaussian} yields
\begin{align}
    \norm{\ev}^2 
    \leq \norm{\dv}^2 
    + 2t_1\norm{\ev} 
    + 2t_2\norm{\dv}.
    \label{eq:error-bound-whp}
\end{align}

Rearranging terms, we obtain
\begin{align*}
    \norm{\ev}^2 - 2t_1\norm{\ev} + t_1^2
    &\le \norm{\dv}^2 + 2t_2\norm{\dv} + t_2^2 + (t_1^2 - t_2^2), \\
    \abs*{\norm{\ev} - t_1}
    &\le \sqrt{(\norm{\dv} + t_2)^2 + (t_1^2 - t_2^2)} .
\end{align*}

Consequently,
\begin{align}
    \|\ev\| 
    \le \|\dv\| + t_1 + t_2 + \sqrt{t_1^2 - t_2^2},
\end{align}
where the final step follows from the inequality $\sqrt{a+b} \le \sqrt{a} + \sqrt{b}$ for all $a,b>0$.

To complete the proof, it remains to bound $\P{(\Ec_1 \cap \Ec_2)^c}$ and choose $t_1,t_2$ such that $t_1^2 - t_2^2 \ge 0$.

Recall that
\[
\| \dv \|^2 
= \| \tilde{\xv} - \xv \|^2 
= D(P) 
= D^* + \bigl[(P^* - P)_+\bigr]^2 ,
\]
which implies
\begin{align}
    \|\ev\| 
    \le \sqrt{D(P)} 
    + t_1 + t_2 + \sqrt{t_1^2 - t_2^2}.
    \label{eq:bound-thm1-final}
\end{align}

For each $\cv_m \in \Cc$, the normalized vector $\ev^{(\cv_m)}/\norm{\ev^{(\cv_m)}}$ is a unit vector in $\R^n$. Hence,
\[
\sum^n_{i=1} z_i \frac{e_i^{(\cv_m)}}{\|\ev^{(\cv_m)}\|} 
\sim \mathcal{N}(0, \sigma^2),
\]
and therefore
\begin{align*}
    \P{\abs*{\sum^n_{i=1} z_i \frac{e_i^{(\cv_m)}}{\norm{\ev^{(\cv_m)}}}} \ge t}
    \le 2 \exp\parens*{-\frac{t^2}{2\sigma^2}} .
\end{align*}

Applying the union bound and using $|\Cc| \le 2^R$, we obtain
\begin{align*}
    \P{\Ec_1^c} 
    \le 2^{R+1} \exp\parens*{-\frac{t_1^2}{2\sigma^2}}, 
    \quad
    \P{\Ec_2^c} 
    \le 2 \exp\parens*{-\frac{t_2^2}{2\sigma^2}} .
\end{align*}

For any $\eta \in (0,1)$, choose
\[
t_1 = \sigma \sqrt{2 \ln 2 \, R (1+\eta)}, 
\quad
t_2 = \sigma \sqrt{2 \ln 2 \, R \eta}.
\]
Then,
\begin{align*}
    \P{\Ec_1^c \cup \Ec_2^c}
    \le 2^{-\eta R + 2}.
\end{align*}

Substituting these values of $t_1$ and $t_2$ into \eqref{eq:bound-thm1-final} yields
\begin{align*}
    \|\xv - \xvh\|_2 
    \le \sqrt{D(P)} 
    + 2\sigma \sqrt{(2 \ln 2) R} (1 + 2\sqrt{\eta}).
\end{align*}

Moreover, since $\xvt$ minimizes distortion over $\Cc$, we have
\begin{align*}
    \| \hat{\xv} - \xv \|^2 
    \ge \|\Tilde{\xv} - \xv\|^2 
    = D(P).
\end{align*}

Combining the upper and lower bounds completes the proof:
\begin{align*}
\sqrt{D(P)} \leq \|\xv-\xvh\| \le \sqrt{D(P)}  + 2\sigma \sqrt{(2 \ln2) R}(1+2\sqrt{\eta}),
\end{align*}

\textbf{Part II.}  
The argument follows a standard approach adapted from~\cite{Goldsmith2005}. We define a collection of decision regions $(Z_1,\ldots,Z_{2^R})$, each corresponding to a subset of the signal space $\R_+^n$, as
\begin{align*}
    Z_m = (\yv: P(\cv_m \text{ is optimal} | \yv) > P(\cv_k \text{ is optimal} | \yv)), \forall k \neq m.
\end{align*}

By construction, these regions are mutually exclusive. We further assume that there exists no $\yv \in \R_+^n$ such that $P(\cv_m \text{ is optimal} | \yv) = P(\cv_k \text{ is optimal} | \yv)$ for any $m \neq k$, ensuring that the codebook space is fully partitioned by the decision regions. Given a noisy observation $\yv \in Z_m$, the optimal denoiser therefore outputs the estimate $\hat{\xv} = \cv_m$. Under the compression-based ML denoiser, the decision regions can equivalently be written as
\begin{align*}
    Z_m = (\yv: \| \yv - \cv_m \| < \| \yv - \cv_k \|), \\ 
    \forall k \in \{ 1, ..., 2^R \}, \hspace{0.01cm} k \neq m, \hspace{0.01cm} 
    m \in \{ 1, ..., 2^R \}.
\end{align*}
That is, the decoded signal is chosen as the codeword closest to the noisy observation in Euclidean distance.

We now characterize the error probability of the ML denoiser.  
Assuming an equally likely codebook, i.e., $P(\cv_m \text{ is optimal}) = 1/2^R$, the probability of error is given by
\begin{align}
 P_e 
&= \sum_{m=1}^{2^R} P(\yv \notin Z_m | \cv_m \text{ is optimal}) P(\cv_m \text{ is optimal}) \notag \\
&= \frac{1}{2^R} \sum_{m=1}^{2^R} P(\yv \notin Z_m | \cv_m \text{ is optimal}) \notag \\
&= 1 - \frac{1}{2^R} \sum_{m=1}^{2^R} P(\yv \in Z_m | \cv_m \text{ is optimal}) \notag \\
&= 1 - \frac{1}{2^R} \sum_{m=1}^{2^R} \int_{Z_m} 
    P(\yv | \cv_m \text{ is optimal}) \, d\yv \notag \\
&= 1 - \frac{1}{2^R} \sum_{m=1}^{2^R} \int_{Z_m} 
    P(\yv = \xv + \nv | \cv_m \text{ is optimal}, \xv) \, d\nv \notag \\
&= 1 - \frac{1}{2^R} \sum_{m=1}^{2^R} \int_{Z_m} 
    P(\yv = \cv_m - \dv + \nv | \cv_m \text{ is optimal}, \xv) \, d\nv \notag \\
&= 1 - \frac{1}{2^R} \sum_{m=1}^{2^R} 
    \int_{Z_m - \cv_m + \dv} p(\nv) \, d\nv. 
\label{eq:Pe_expression}
\end{align}
Here, $\cv_m = g(f(\xv)) = \arg \min_{\cv_k \in\Cc} \|\cv_k - \xv\|^2$, and $\dv = \cv_m - \xv$.

The integrals in~\eqref{eq:Pe_expression} are taken over $n$-dimensional subsets $Z_m \subset \R_+^n$. Let $A_{mv}$ denote the event that $\|\yv - \cv_v\| < \|\yv - \cv_m\|$ conditioned on $\cv_m$ being optimal. When $A_{mv}$ occurs, the decoder selects $\cv_v$ instead of the correct codeword $\cv_m$, resulting in a decoding error. Correct decoding requires that $\|\yv - \cv_m\| < \|\yv - \cv_v\|$ for all $v \neq m$. Applying the union bound, we obtain
\begin{align}
P_e(\cv_m \text{ is optimal}) 
&= P\!\left( \bigcup_{v = 1, v \neq m}^{2^R} A_{mv} \right) 
\le \sum_{v = 1, v \neq m}^{2^R} P(A_{mv}),
\label{eq:Pe_union_bound}
\end{align}

where
\begin{align*}
P(A_{mv}) 
&= P(\|\cv_v - \yv\| < \|\cv_m - \yv\| | \cv_m \text{ is optimal}) \notag \\
&= P(\|\cv_v - (\cv_m - \dv + \nv)\| 
      < \|\cv_m - (\cv_m - \dv + \nv)\|) \notag \\
&= P(\|\nv - \dv + \cv_m - \cv_v\| 
      < \|\nv - \dv\|) \notag \\
&= P(\|\nv - \dv + \cv_m - \cv_v\|^2 
      < \|\nv - \dv\|^2).
\label{eq:pAik}
\end{align*}

Define $\wv = \nv - \dv$, where $\nv \sim \mathcal{N}(\mathbf{0}, \sigma^2 \mathbf{I}_n)$. Since $\dv$ is deterministic given $\xv$ and $\cv_m$, it follows that $\wv \sim \mathcal{N}(-\mathbf{d}, \sigma^2 \mathbf{I}_n)$. The inequality above can be rewritten as
\begin{align*}
\|\wv + \cv_m - \cv_v\|^2 &< \|\wv\|^2 \\
\|\wv\|^2 - 2 \angles{\wv, \cv_v - \cv_m} 
    + \|\cv_m - \cv_v\|^2 &< \|\wv\|^2 \\
\angles{\wv, \cv_v - \cv_m} 
&> \frac{\|\cv_m - \cv_v\|^2}{2} \\
\angles{\wv, \frac{\cv_v - \cv_m}{\|\cv_m - \cv_v\|}} 
&> \frac{\|\cv_m - \cv_v\|}{2}.
\end{align*}
Hence,
\begin{align*}
P(A_{mv}) 
&= P\!\left( 
\angles{\wv, \frac{\cv_v - \cv_m}{\|\cv_m - \cv_v\|}} 
> \frac{\|\cv_m - \cv_v\|}{2} 
\right).
\end{align*}

This probability depends only on the projection of $\wv$ onto the line connecting the origin and $\cv_m - \cv_v$. Let $w$ denote this one-dimensional projection, which is Gaussian with mean 
$\angles{-\dv, \frac{\cv_v - \cv_m}{\|\cv_m - \cv_v\|}}$ and variance $\sigma^2$. The event $A_{mv}$ occurs when $w > \frac{\|\cv_m - \cv_v\|}{2}$, yielding
\begin{align*}
P(A_{mv}) 
&= P\left( w > \frac{\|\cv_m - \cv_v\|}{2} \right) \\
&= \int_{\|\cv_m - \cv_v\|/2}^{\infty} 
   \frac{1}{\sqrt{\pi 2\sigma^2}} 
   \exp\!\left[
   -\frac{\left(
   h + \angles{\dv, \frac{\cv_v - \cv_m}{\|\cv_m - \cv_v\|}}
   \right)^2}{2\sigma^2}
   \right] dh \\
&= Q\!\left(
\frac{\frac{\|\cv_m - \cv_v\|}{2} 
+ \angles{\dv, \frac{\cv_v - \cv_m}{\|\cv_m - \cv_v\|}}}
{\sigma}
\right),
\end{align*}
where $Q(h) = \int_{h}^{\infty} \frac{1}{\sqrt{2\pi}} e^{-x^2/2} \, dx$.

Substituting into~\eqref{eq:Pe_union_bound} gives
\begin{align*}
P_e(\cv_m \text{ is optimal}) 
&\le \sum_{v \neq m}^{2^R} 
Q\!\left(
\frac{\frac{\|\cv_m - \cv_v\|}{2} 
+ \angles{\dv, \frac{\cv_v - \cv_m}{\|\cv_m - \cv_v\|}}}
{\sigma}
\right).
\end{align*}

Finally, averaging over all codewords yields the union bound
\begin{align*}
P_e 
&= \sum_{m=1}^{2^R} P(\cv_m) P_e(\cv_m \text{ is optimal}) \\
&\le \frac{1}{2^R} \sum_{m=1}^{2^R} 
\sum_{\substack{v=1 \\ v \neq m}}^{2^R} 
Q\!\left(
\frac{\frac{\|\cv_m - \cv_v\|}{2} 
+ \angles{\dv, \frac{\cv_v - \cv_m}{\|\cv_m - \cv_v\|}}}
{\sigma}
\right).
\end{align*}  

\section{Additional experiment results}
\label{app:additional_experiments}

\begin{table*}[t]
\footnotesize
\centering
\setlength{\tabcolsep}{3.0pt}        % tighter columns
\renewcommand{\arraystretch}{1.05}   % tighter rows

\begin{tabular}{c|l|ccccc}
\hline
\multirow{2}{*}{\textbf{Noise}}
& \multirow{2}{*}{\textbf{Method}}
& \multicolumn{5}{c}{\textbf{COCO2017}} \\
\cline{3-7}
&
& \textbf{FID}$\downarrow$
& \textbf{LPIPS}$\downarrow$
& \textbf{PI}$\downarrow$
& \textbf{PSNR}$\uparrow$
& \textbf{SSIM}$\uparrow$ \\
\hline\hline

% ========================= Sigma = 15 =========================
\multirow{8}{*}{$\sigma=15$}

& Noisy
& 30.4841 & 0.2806 & 3.2707 & 27.2663 & 0.6972 \\

& BM3D~\cite{dabov2007bm3d}
& 32.6753 & 0.1454 & 3.8140 & 30.7513 & 0.8789 \\

& N2C~\cite{WangDenoise2023}
& 11.3022 & 0.0670 & 3.2584 & \underline{31.3145} & 0.9140 \\

& N2N~\cite{lehtinen2018noise2noise}
& 11.2186 & 0.0680 & 3.2844 & \textbf{31.3311} & \underline{0.9143} \\

& DD~\cite{heckel2018deep}
& 30.1372 & 0.2152 & 3.8365 & 28.2037 & 0.8274 \\

& DeCompress~\cite{zafari2025decompress}
& 27.1170 & 0.1152 & \textbf{2.5488} & 28.6527 & 0.8415 \\

& \textbf{DiffDeCompress}
& \underline{10.0719} & \underline{0.0646} & \underline{2.9227} & 28.2667 & 0.8533 \\

& \textbf{CGanDeCompress}
& \textbf{6.0210} & \textbf{0.0332} & 3.3421 & 31.2392 & \textbf{0.9161} \\

\hline\hline

% ========================= Sigma = 25 =========================
\multirow{8}{*}{$\sigma=25$}

& Noisy
& 45.0075 & 0.4890 & 6.1671 & 23.0797 & 0.4961 \\

& BM3D
& 40.7810 & 0.1961 & 4.3051 & 29.7169 & 0.8490 \\

& N2C
& 15.2267 & 0.0944 & 3.4406 & \underline{30.2771} & \textbf{0.8888} \\

& N2N
& 16.2335 & 0.1006 & 3.5540 & \textbf{30.2831} & \underline{0.8885} \\

& DD
& 51.9262 & 0.2780 & 3.7112 & 26.9307 & 0.7893 \\

& DeCompress
& 40.4946 & 0.2096 & \textbf{2.8647} & 26.6663 & 0.7825 \\

& \textbf{DiffDeCompress}
& \underline{14.3620} & \underline{0.0886} & \underline{3.0277} & 27.1689 & 0.8231 \\

& \textbf{CGanDeCompress}
& \textbf{9.1813} & \textbf{0.0539} & 3.4899 & 29.8377 & 0.8839 \\

\hline\hline

% ========================= Sigma = 50 =========================
\multirow{8}{*}{$\sigma=50$}

& Noisy
& 80.2041 & 0.8733 & 9.8083 & 17.7498 & 0.2897 \\

& BM3D
& 54.2648 & 0.2797 & 5.2882 & 27.7774 & 0.7916 \\

& N2C
& 22.6049 & 0.1521 & 3.7821 & \textbf{28.3397} & \textbf{0.8358} \\

& N2N
& 23.8651 & 0.1520 & \underline{3.6438} & \underline{28.2550} & \underline{0.8340} \\

& DD
& 82.7493 & 0.4418 & 3.5764 & 23.5001 & 0.6733 \\

& DeCompress
& 75.9995 & 0.4142 & 4.2321 & 23.3746 & 0.6726 \\

& \textbf{DiffDeCompress}
& \underline{22.5364} & \underline{0.1521} & \textbf{3.2925} & 25.5890 & 0.7694 \\

& \textbf{CGanDeCompress}
& \textbf{15.8919} & \textbf{0.1037} & 3.8198 & 27.5443 & 0.8219 \\

\hline
\end{tabular}

\caption{
\small Performance on COCO2017 with Gaussian noise $\mathcal{N}(0,\sigma^2)$ for $\sigma \in \{15,25,50\}$.
All models are trained on OpenImages.
Best results are shown in \textbf{bold}, and second-best results are \underline{underlined}.
}
\label{tab:coco2017_sigma15_25_50}
\end{table*}

Table~\ref{tab:coco2017_sigma15_25_50} reports denoising performance on COCO2017 under additive Gaussian noise with $\sigma\in\{15,25,50\}$.
Across all noise levels, the proposed generative compression-based denoisers, CGanDeCompress and DiffDeCompress, consistently achieve superior perceptual quality compared to classical, supervised, and recent compression-based baselines, as reflected by improvements in FID, LPIPS, and PI.
At the same time, distortion-driven methods such as N2C and N2N attain the highest PSNR/SSIM, illustrating the expected perceptual--distortion trade-off.

For light noise ($\sigma=15$), CGanDeCompress achieves the best perceptual similarity, obtaining the lowest FID and LPIPS (FID $6.02$, LPIPS $0.0332$) while also matching the top distortion performance (SSIM $0.9161$).
DiffDeCompress provides the second-best FID/LPIPS and improves perceptual realism compared to supervised baselines.
In contrast, N2C/N2N maximize PSNR ($\approx31.3$\,dB) but exhibit noticeably higher perceptual distances, indicating less realistic texture recovery.

At moderate noise ($\sigma=25$), the proposed methods maintain clear perceptual advantages.
CGanDeCompress achieves the best FID and LPIPS (FID $9.18$, LPIPS $0.0539$), while DiffDeCompress remains competitive and yields strong perceptual scores.
Although N2C/N2N preserve higher PSNR/SSIM, their perceptual metrics are consistently weaker, highlighting the benefit of compression-guided generative reconstruction.

Under severe noise ($\sigma=50$), perceptual gains become more pronounced.
CGanDeCompress achieves the best distributional quality and perceptual similarity (FID $15.89$, LPIPS $0.1037$), whereas DiffDeCompress attains the lowest PI (PI $3.29$), suggesting enhanced perceptual realism under heavy corruption.
Overall, these results demonstrate that generative compression enables robust perceptual denoising on challenging natural-image benchmarks, producing visually faithful reconstructions while maintaining competitive distortion performance.

\begin{figure*}[t]
    \centering
    \includegraphics[width=0.9\textwidth]{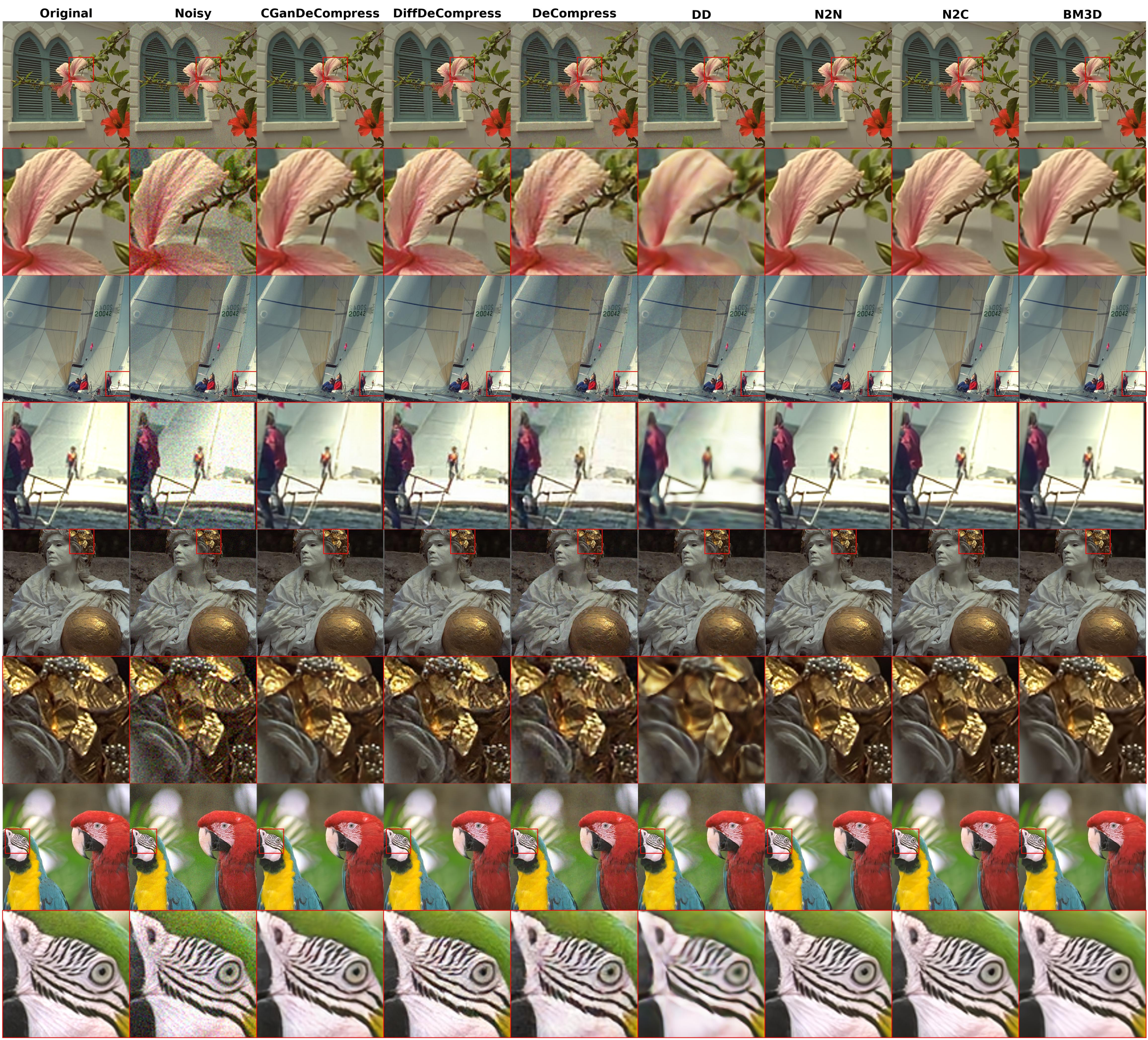}
    \caption{\small Visual comparison of denoising methods on randomly selected KODAK images at noise level $\sigma=15$.
    The first two columns present the original images and their noisy observations, followed by the denoised results generated by our methods and popular baselines.
    Zoomed-in regions are shown in the last two rows for detailed inspection.
    Best viewed on screen.}
    \label{fig:methods_comparison_kodak_15}
\end{figure*}

\begin{figure*}[t]
    \centering
    \includegraphics[width=0.9\textwidth]{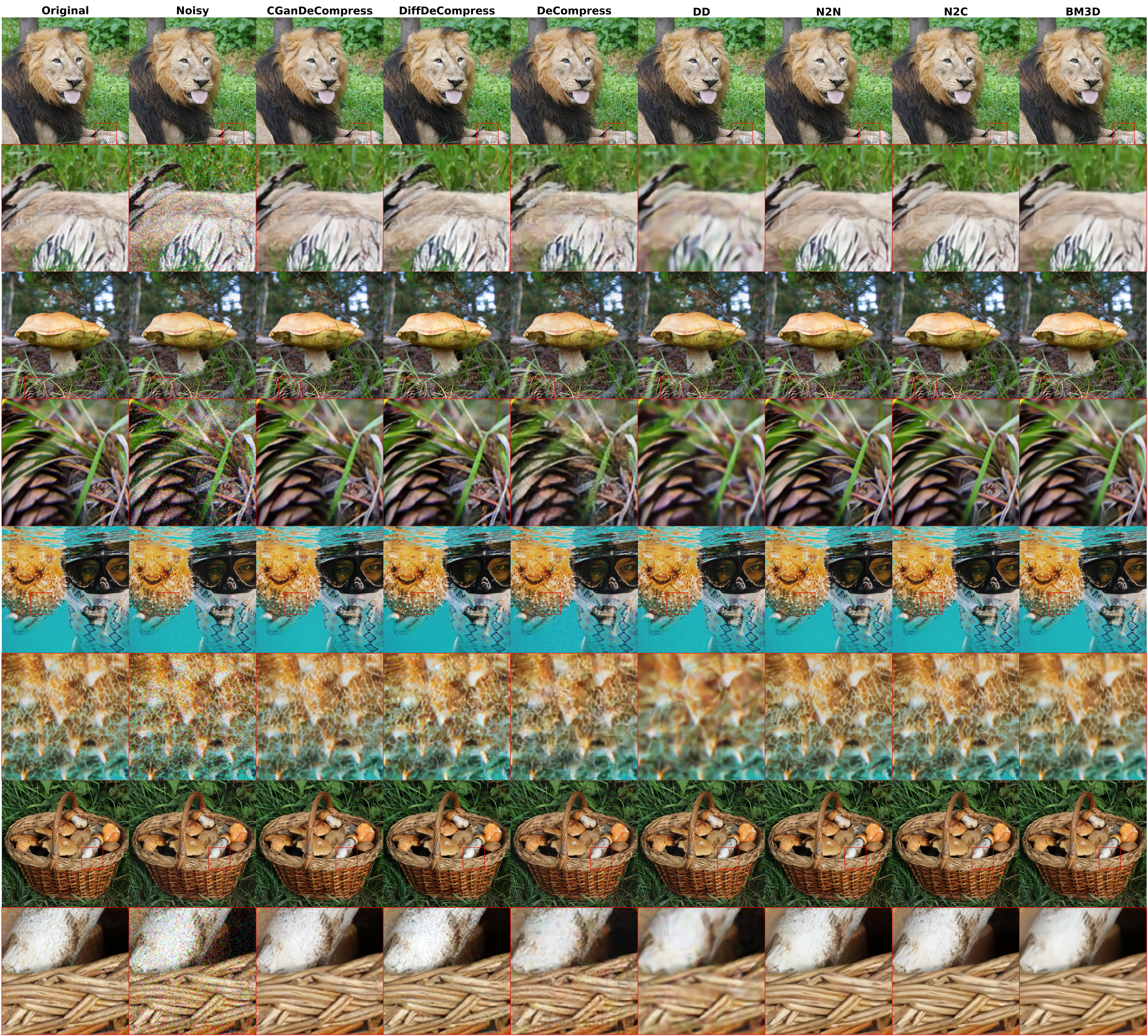}
    \caption{\small Visual comparison of denoising methods on randomly selected DIV2K images at noise level $\sigma=25$.
    The first two columns present the original images and their noisy observations, followed by the denoised results generated by our methods and popular baselines.
    Zoomed-in regions are shown in the last two rows for detailed inspection.
    Best viewed on screen.}
    \label{fig:methods_comparison_div2k_25}
\end{figure*}

\begin{figure*}[t]
    \centering
    \includegraphics[width=0.9\textwidth]{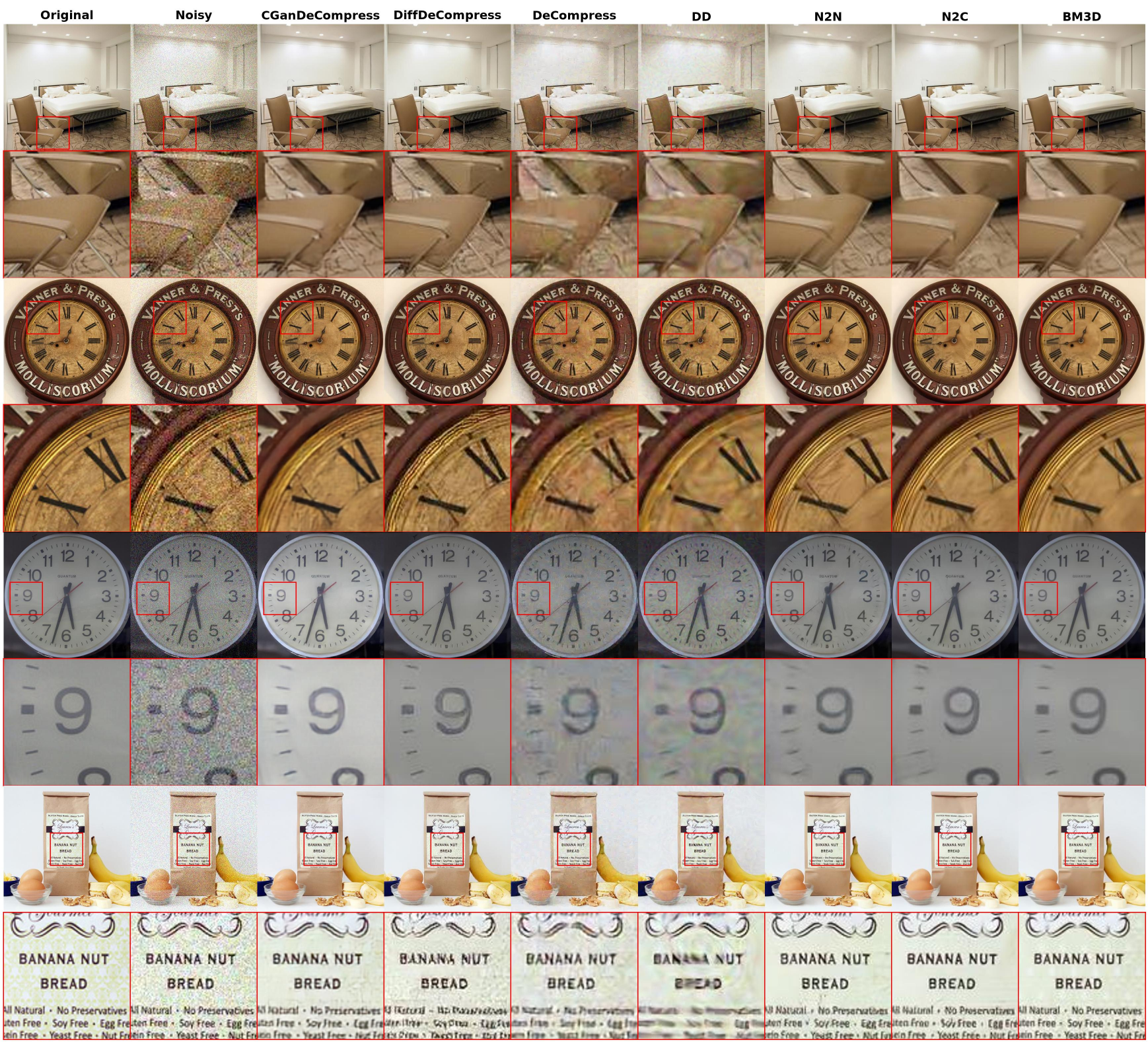}
    \caption{\small Visual comparison of denoising methods on randomly selected COCO2017 images at noise level $\sigma=25$.
    The first two columns present the original images and their noisy observations, followed by the denoised results generated by our methods and popular baselines.
    Zoomed-in regions are shown in the last two rows for detailed inspection.
    Best viewed on screen.}
    \label{fig:methods_comparison_coco2017_25}
\end{figure*}

\end{document}